\documentclass{article}

\usepackage[margin=1in]{geometry}
\usepackage{natbib}                 % \citep (ICLR style provided this; article needs it explicitly)
\usepackage{amsmath,amssymb,amsthm}
\usepackage{graphicx}
\usepackage{booktabs}
\usepackage{xcolor}
\usepackage[hyphens]{url}
\usepackage{hyperref}
\hypersetup{colorlinks=true,linkcolor=blue,citecolor=blue,urlcolor=blue}
\graphicspath{{figures/}}

\newcommand{\epscert}{\epsilon_{\mathrm{cert}}}
\newcommand{\Ccal}{\mathcal{C}}
\newcommand{\Ical}{\mathcal{I}}
\newcommand{\ICVop}{\mathrm{ICV}}
\newcommand{\bUCB}{b_h^{\mathrm{UCB}}}
\newcommand{\Teq}{T_{\mathrm{eq}}}
\newcommand{\Tdrift}{T_{\mathrm{drift}}}
\newcommand{\Tlam}{T_{\lambda}}
\newcommand{\Uclu}{U^{\mathrm{cluster}}_{0.95}}
\newcommand{\Lsafe}{L_{\mathrm{safe}}}
\providecommand{\norm}[1]{\left\lVert #1 \right\rVert}

\newtheorem{proposition}{Proposition}
\newtheorem{corollary}{Corollary}
\theoremstyle{remark}
\newtheorem{remark}{Remark}
\theoremstyle{plain}

\title{Certified World Models as Sensing Clocks:\\
Drift-Aware Deadlines for Active Perception}
\author{Hongbo Wang}
\date{}

\begin{document}
\maketitle

\begin{abstract}
Certified world models estimate how long their predictions remain valid. We turn this validity horizon into an operational \textbf{sensing clock}: a rule for when an agent should stop coasting and re-sense. Starting from an audited equivariant world model, we derive a deadline for no-sensing intervals and show that deployable deadlines in learned world models must be \textbf{drift-aware}: on-manifold Lyapunov rates alone overestimate coasting validity, while calibrated native rollout-drift envelopes carry the deployed guarantee. On a frozen 3D VN-JEPA model, the resulting clock controls held-out interval-simultaneous certificate violation across seeds and data shards. In a cue-conditioned theorem-bed (a synthetic bench where all schedulers share the exact model, isolating the scheduling rule), the clock remains valid on the deployment distribution and substantially reduces eventful-tail violations relative to exact-mixture expected-belief scheduling at matched sensing budget. We also report limits: in the short-horizon frozen VN-JEPA regime, empirical conformal horizons match the deployed clock on validity and budget, and a partial-reset exploration finds no clean budget-matched advantage for the spectral term. Thus the contribution is a \textbf{certified sensing-clock primitive and drift-aware deployment method}, not a claim that spectral clocks empirically dominate all non-spectral schedulers.
\end{abstract}

\section{Introduction}
\label{sec:intro}

A world model that predicts the future still does not know \emph{when its own predictions expire}. An agent coasting on open-loop predictions must eventually re-sense; sensing too late forfeits the model's validity guarantee, sensing too early wastes a scarce budget. Fixed re-planning periods ignore the model's actual reliability, and purely reactive monitors act only \emph{after} a residual has already grown. We study the missing primitive:

\begin{quote}
\textbf{A certified world model should expose its validity horizon as an operational \emph{sensing deadline} --- a clock that tells the agent when to re-sense.} When the certified horizon counts down to zero, the agent stops coasting and senses.
\end{quote}

This completes a broader program over certified world models: \emph{when to trust} a prediction (prior certified-horizon work~\citep{wang2026certifiedwm}), \emph{when to replan}, and --- here --- \textbf{when to sense}. Figure~\ref{fig:clock} sketches the primitive.

\paragraph{Crowded-field positioning.}
Active perception, value-of-information, $\rho$-POMDP planning with guarantees, active-inference / expected-free-energy scheduling, and conformal sensing all schedule observations. Our distinction is narrow and explicit: we contribute the \textbf{first proactive, analytically-derived certified-horizon sensing trigger} computed from an audited (equivariant) world model --- with the model's spectral expansion rate as the \emph{audit interface} the horizon is derived through, and a calibrated drift envelope carrying the deployed guarantee --- contrasted against reactive information-gain greedy schedulers, fixed-horizon polling, and conformal residual monitors that lack a predictive-validity certificate. This is a \emph{crowded-field differentiation paper with a new primitive}, not an empty-niche claim.

\paragraph{Contributions.}
We make four positive contributions and state one boundary explicitly.
\begin{enumerate}
\item \textbf{The sensing-clock primitive} (\texttt{certificate = sensing clock}): a certified coasting deadline that controls interval-simultaneous certificate violation (\S3, Proposition~\ref{prop:drift}, \S4).
\item \textbf{A drift-aware deployment method} --- deployable deadlines are not $\lambda$-only: on-manifold Lyapunov rates over-state the deployable horizon by roughly an order of magnitude, and the load-bearing term is a calibrated native rollout-drift envelope (\S3, \S5.4).
\item \textbf{Instantiation on a real equivariant world model} --- on a frozen 3D VN-JEPA model the clock controls held-out interval-ICV, confirmed across seeds and data shards (\S4, \S5.1).
\item \textbf{A reactive-contrast theorem-bed result} --- under a shared model, expected-belief reactive scheduling (MB-EIG) under-prices eventful-tail risk: the certified clock stays valid on the deployment distribution and cuts eventful-tail violations from $0.36$ to $0.16$ at matched budget, with MB-EIG needing $\approx3\times$ budget to recover the tail (\S4, \S5.2).
\end{enumerate}

\paragraph{Boundary (stated, not hidden).}
We do \emph{not} claim empirical dominance over non-spectral schedulers. An exact-synthetic lead-time test returns null (\S5.3); in the learned regime a non-spectral empirical conformal horizon matches the deployed clock on validity and budget (\S5.4); and a partial-reset exploration finds no budget-matched edge for the spectral term (\S5.5). These results narrow --- not refute --- the primitive (\S6).

\begin{figure}[t]
\centering
\includegraphics[width=\linewidth]{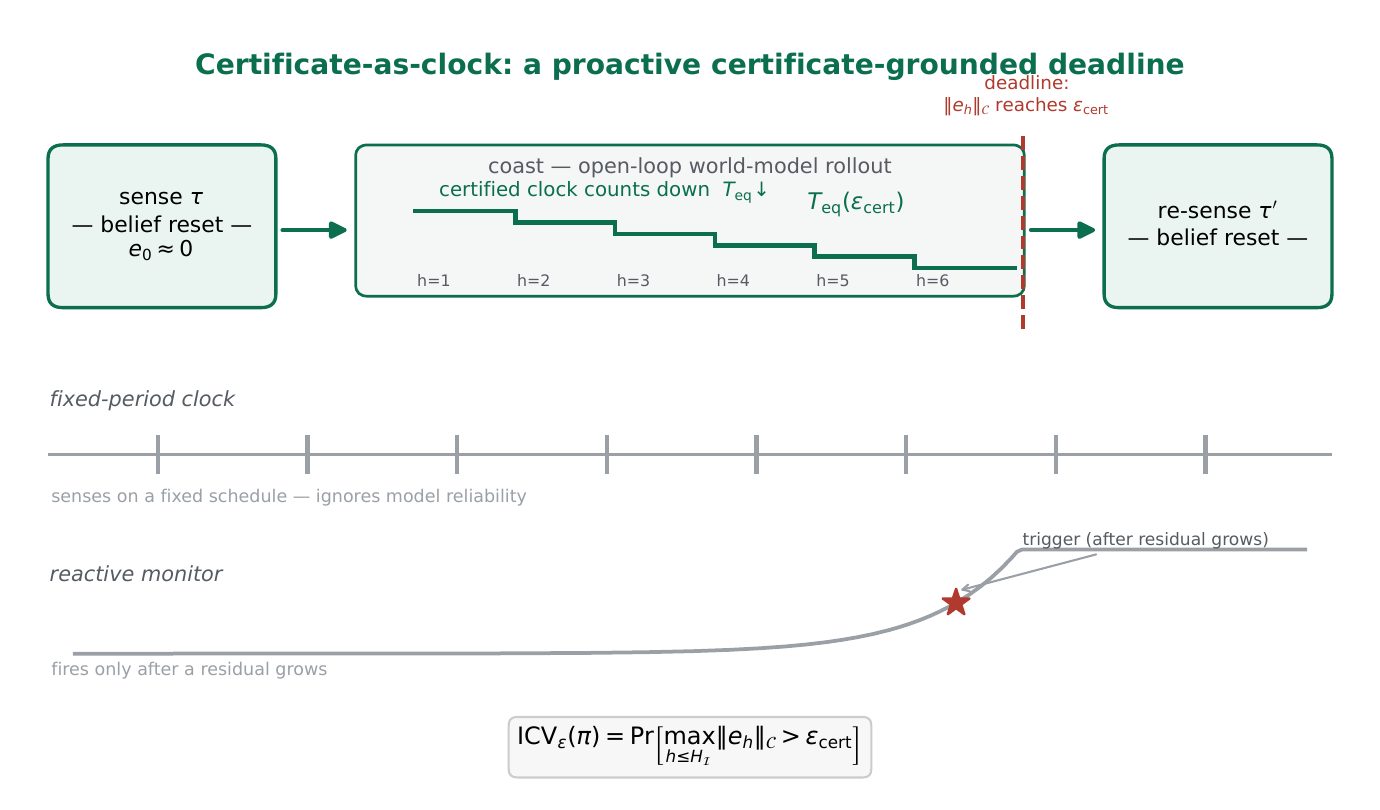}
\caption{\textbf{The certificate-as-clock primitive.} A certified world model exposes its validity horizon as an operational sensing deadline: after a sense resets belief ($e_0\approx0$), the model coasts open-loop while the certificate holds and re-senses when the certified clock $\Teq(\epscert)$ expires (the coasting error norm $\norm{e_h}_{\Ccal}$ reaches the certified tolerance $\epscert$). Unlike a fixed-period clock (insensitive to model reliability) or a reactive monitor (fires only after a residual grows), the deadline is proactive and certificate-grounded.}
\label{fig:clock}
\end{figure}

\section{Problem Setup}
\label{sec:setup}

\paragraph{Coasting intervals.}
An agent alternates \emph{sensing} (resetting belief from an observation) and \emph{coasting} (predicting open-loop with the world model). A coasting interval $\Ical=(\tau,\tau')$ spans the steps between two senses; $H_{\Ical}=\tau'-\tau-1$ is its length. The deployed policy is the deadline rule that sets $\tau'$ given the state at $\tau$.

\paragraph{Interval-simultaneous certificate violation (primary metric).}
Let $\norm{\cdot}_{\Ccal}$ be a certificate norm on the world model's latent (for VN-JEPA, an irrep/block norm; \S3). The pre-registered primary metric is the probability that the coasting prediction error exceeds the certified tolerance $\epscert$ \emph{anywhere} in the interval:
\begin{equation}
\ICVop_\epsilon(\pi)=\Pr_{\Ical}\Big[\max_{1\le h\le H_{\Ical}}\norm{\hat z_{\tau+h\mid\tau}-z_{\tau+h}}_{\Ccal}>\epscert\Big],
\qquad \bar U:=\Uclu\big(\widehat{\ICVop}_\epsilon\big),
\label{eq:icv}
\end{equation}
with a cluster-bootstrap upper bound over seeds/episodes (intervals within an episode are not independent); we write $U_{95}$ for $\bar U$ in tables and results. The pre-registered target is $\bar U\le\alpha_{\mathrm{target}}+\delta=0.10+0.05=0.15$.

\paragraph{Sensing budget.}
$B(\pi)=\mathbb{E}[\#\text{sense}/T]$, the average sensing rate. Comparisons are made at \emph{matched budget} (thresholds calibrated on a calibration split, frozen, evaluated once on test).

\paragraph{Lead-time (optional; native traces permitting).}
For safety-critical deployments one also wants \emph{actionable} lead-time $\Lsafe=t_{\mathrm{last\text{-}safe}}-\tau_{\mathrm{trigger}}$. This requires native hazard / last-safe / intervention semantics, which the VN-JEPA coasting corpus does not contain (\S5.4); we test lead-time only on a synthetic bench (\S5.3) and explicitly defer the learned-regime lead-time question.

\section{Method}
\label{sec:method}

\paragraph{Spectral horizon (naive).}
If coasting error grew purely by a local expansion rate $\lambda$ from a post-sense error $e_0$, the deadline would be $\Tlam(\epsilon)\sim\log(\epsilon/e_0)/\lambda$. On synthetic linear-Gaussian error dynamics, this oracle spectral horizon provides a \emph{calibrated upper bound} on interval-simultaneous certificate violation, underlying the mechanism unit tests (\S4.1).

\paragraph{Drift-aware correction (load-bearing).}
On a real frozen world model the on-manifold rate alone over-states the deployable horizon, because the coasting error carries native rollout drift, model bias, and off-manifold departure that $\lambda$ does not see. We bound the realized error by
\begin{equation}
r_h\le \bUCB+C\,e^{\lambda h}e_0+\eta_h,
\qquad
\Teq=\sup\bigl\{h:\ \bUCB+C\,e^{\lambda h}e_0+\eta_h\le\epscert\bigr\},
\label{eq:drift-bound}
\end{equation}
where $\bUCB$ is a calibrated \textbf{native rollout-drift envelope} (a held-out high quantile of the coasting error at horizon $h$), $C e^{\lambda h}e_0$ the audited local-expansion term, and $\eta_h$ a finite-sample slack. \emph{The audited Lyapunov rate ranks local on-manifold expansion; deployable coasting validity is governed by the sum of local expansion and native rollout drift.}

\begin{proposition}[drift-aware interval certificate]
\label{prop:drift}
Suppose on the calibration distribution the bound $\bUCB+C\,e^{\lambda h}e_0+\eta_h$ dominates the realized coasting error $r_h$ simultaneously over $1\le h\le H$ with probability at least $1-\alpha$. Then the deadline $\Teq=\sup\{h:\ \bUCB+C\,e^{\lambda h}e_0+\eta_h\le\epscert\}$ controls interval-simultaneous certificate violation at level $\alpha$, up to the calibration failure probability: $\ICVop_{\epscert}(\pi_{\Teq})\le\alpha$.
\end{proposition}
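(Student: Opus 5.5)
The proof is a direct event-inclusion argument, and I will carry it out on a single coasting interval before passing to the probability over intervals. Fix an interval $\Ical=(\tau,\tau')$ produced by the deadline policy $\pi_{\Teq}$, so that $H_{\Ical}\le \Teq$. Write $B_h:=\bUCB+C\,e^{\lambda h}e_0+\eta_h$ for the bound in (\ref{eq:drift-bound}) and $r_h=\norm{\hat z_{\tau+h\mid\tau}-z_{\tau+h}}_{\Ccal}$.

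The first step is a monotonicity observation that turns the supremum into a pointwise statement: every $h\le \Teq$ satisfies $B_h\le\epscert$. If $B_h$ is nondecreasing in $h$, this is immediate. That holds when $\lambda\ge0$ and $\bUCB,\eta_h$ are nondecreasing, which is the natural case since quantiles of accumulated drift grow. In general I will not rely on monotonicity and will instead \emph{define} the deadline as the first-crossing time,
\begin{equation*}
\Teq=\max\{h:\ B_{h'}\le\epscert\ \text{for all } h'\le h\}.
\end{equation*}
I will remark that this coincides with the stated supremum under monotonicity. I expect this to be the main subtlety: the proposition as written uses $\sup$, and without monotonicity a late dip of $B_h$ below $\epscert$ could certify an interval that crosses the tolerance earlier. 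I would state the monotone-envelope (or first-crossing) convention explicitly.

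The second step is the event inclusion. Let $G$ be the good calibration event $\{r_h\le B_h\ \forall\,1\le h\le H\}$, which has probability at least $1-\alpha$ by hypothesis, and take $H\ge \Teq$. On $G$, for every $1\le h\le H_{\Ical}\le \Teq$ we have $r_h\le B_h\le\epscert$, hence $\max_{h\le H_{\Ical}} r_h\le\epscert$. So the violation event in (\ref{eq:icv}) is contained in $G^c$, and therefore
\begin{equation*}
\ICVop_{\epscert}(\pi_{\Teq})\le\Pr(G^c)\le\alpha.
\end{equation*}

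The last step is to make the qualifier ``up to the calibration failure probability'' precise. The hypothesis is stated on the calibration distribution, but $\ICVop$ is evaluated on deployment intervals. I will assume exchangeability or identical distribution of calibration and test intervals, with $\bUCB$ and $\eta_h$ frozen before test. Under that assumption $\Pr(G^c)\le\alpha$ transfers to the test distribution. If instead $\bUCB$ is a random quantile estimate, $\eta_h$ is chosen as a DKW/conformal slack so that $G$ holds with probability at least $1-\alpha$ marginally over the calibration draw, or with probability at least $1-\alpha$ conditionally except on a calibration-failure event of probability $\delta$. The bound then reads $\alpha+\delta$, which is exactly the stated caveat.
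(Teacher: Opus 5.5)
Your proposal is correct and uses the same event-inclusion argument as the paper's proof sketch: on the event that the bound dominates $r_h$ simultaneously, $r_h\le B_h\le\epscert$ for every $h\le\Teq$, so any interval-simultaneous violation lies in the calibration-failure event, which has probability at most $\alpha$. Your additions make explicit steps the sketch leaves implicit rather than changing the route: the first-crossing (or monotone-envelope) convention needed to get from the $\sup$ defining $\Teq$ to $B_h\le\epscert$ for all $h\le\Teq$, which the paper's sketch uses tacitly; the requirement $H\ge\Teq$; and an exchangeability assumption to transfer the calibration guarantee to deployment intervals.
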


\begin{proof}[Proof sketch]
On the event that the bound holds simultaneously, for all $h\le\Teq$ we have $r_h\le\bUCB+Ce^{\lambda h}e_0+\eta_h\le\epscert$, so no interval-simultaneous violation occurs; the violation event is contained in the calibration-failure event of probability $\le\alpha$.
\end{proof}

\noindent This restates the deployed calibration protocol as a certificate; it makes no claim that $\lambda$ improves the bound --- in the full-resense regime $e_0\approx0$ and the bound is drift-dominated.

\paragraph{Deployed form in the full-resense regime.}
With full re-sensing the post-sense error $e_0\approx0$, so $Ce^{\lambda h}e_0$ vanishes and the deployed deadline reduces to the \textbf{drift-envelope clock}
\begin{equation}
\Tdrift(\epsilon)=\sup\{h:\ \bUCB\le\epscert\}.
\label{eq:tdrift}
\end{equation}
We confirm this is exactly the deployed deadline on the frozen VN-JEPA (\S5.4). Consequently the on-manifold spectrum $\lambda$ acts as a \textbf{local-expansion audit / theoretical interface}, not an active term in the deployed horizon. (Empirically, audited on-manifold $\lambda\approx0.07$; a naive $\lambda$-only horizon over-states the deployable horizon by roughly an order of magnitude, while the drift envelope yields the deployed ${\sim}2$--$3$ step horizon.)

\begin{figure}[t]
\centering
\includegraphics[width=\linewidth]{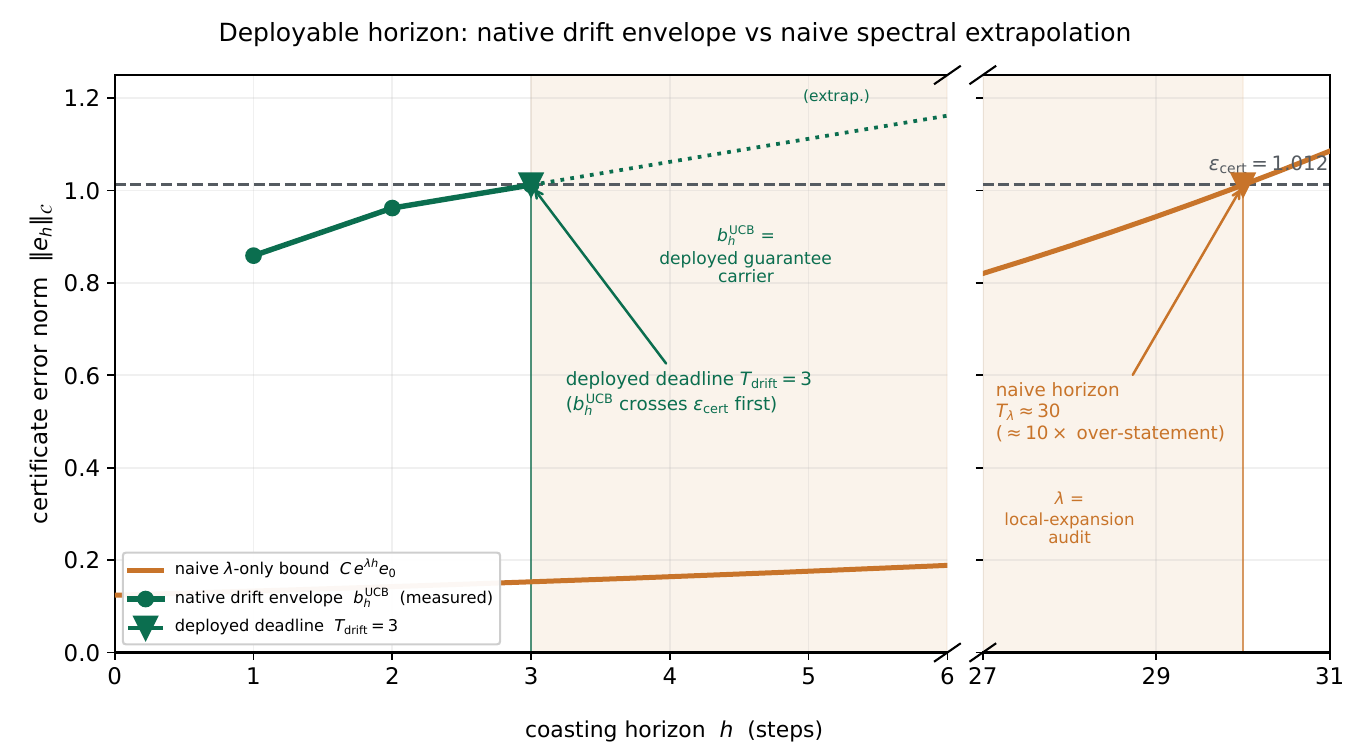}
\caption{\textbf{Why a spectral-only horizon over-states the deadline.} Coasting error vs horizon at fixed $\epscert$ (broken $h$-axis: left = measured regime, right = the naive crossing). On-manifold $\lambda$ ranks local expansion, but native rollout drift determines deployable coasting validity in the frozen VN-JEPA regime: the calibrated drift envelope $\bUCB$ crosses $\epscert$ first, at the deployed $\Tdrift\approx2$--$3$ steps, whereas the naive $\lambda$-only bound $C e^{\lambda h}e_0$ (audited $\lambda\approx0.07$) crosses roughly an order of magnitude later. The deployed deadline is set by whichever term reaches $\epscert$ first; in the full-resense regime ($e_0\approx0$) that is always the drift envelope. The plotted $\epscert=1.012$ is a representative audit value for the mechanism schematic; deployed tolerances are calibrated independently per split, as reported in Table~\ref{tab:s1b}.}
\label{fig:drift}
\end{figure}

\paragraph{Calibration protocol.}
On VN-JEPA the certificate norm is the per-block normalized maximum over irrep blocks $b$, $\norm{e}_{\Ccal}=\max_b \norm{e_b}_2/s_b$, with per-block scales $s_b$. These scales, the tolerance $\epscert$, the drift envelope $\bUCB$, and any conformal slack are fixed on a calibration split, where $\epscert$ is chosen as the smallest tolerance whose cal interval-ICV upper bound meets $\alpha_{\mathrm{target}}$ at the induced deadline. The test split is evaluated once with no retuning. The spectral audit is performed \emph{on the native on-manifold trajectory} (free-iteration of a short-horizon residual predictor produces an off-manifold expansion artifact, $\lambda\approx3.5$ vs the on-manifold $\approx0.07$, and is used only as an off-manifold stress diagnostic).

\section{Experiments}
\label{sec:exp}

A ladder isolates one variable per rung; failure attribution stays clean. Table~\ref{tab:ladder} maps the rungs; results follow in \S5.

% Table 1 -- experimental ladder (reviewer's map)
\begin{table}[t]
\centering
\small
\caption{Experimental ladder --- the reviewer's map. The ladder isolates one variable per rung; confirmatory rungs (0A\,$\to$\,1b-lite\,$\to$\,2A) build the positive claim, while honest negatives (2B, 2C-V, E0) define the boundary.}
\label{tab:ladder}
\begin{tabular}{llll}
\toprule
Rung & Question & Result & Status \\
\midrule
Stage 0A      & oracle deadline mechanism (synthetic)  & PASS           & confirmatory \\
Stage 1a      & finite-data audited spectrum           & PASS           & confirmatory \\
Stage 1b-lite & frozen 3D VN-JEPA clock                 & PASS-CANDIDATE & seed + shard \\
Stage 2A      & expected-belief tail risk               & PASS           & mechanism \\
Stage 2B      & exact-synthetic lead-time               & NULL           & honest negative \\
Stage 2C-V    & learned validity/budget replaceability  & warning        & conformal matches \\
E0            & spectral-term deployment edge           & NULL           & exploratory \\
\bottomrule
\end{tabular}
\end{table}

\noindent Per-rung detail:
\begin{itemize}
\item \textbf{Stage 0A} --- oracle deadline mechanism on a synthetic error-dynamics bench: the union-bound worst-case deadline controls interval-ICV ($U_{95}=0.040$ and $0.073$ on two spec variants, both $\le0.15$), and the bench is falsifiable (ICV grows monotonically with the sensing period, with the certified deadline at the knee). \textbf{PASS.}
\item \textbf{Stage 1a} --- replace the oracle spectrum with a finite-data audited estimate (OLS + bootstrap UCB): validity survives estimation error ($U_{95}\le0.073$) at near-oracle budget (ratio $\le1.11$), down to $n\approx5$ audit samples. \textbf{PASS.}
\item \textbf{Stage 1b-lite} --- a frozen 3D VN-JEPA equivariant world model (an SO(3)-equivariant vector-neuron JEPA from the same certified-horizon program as \citealt{wang2026certifiedwm}): audited spectrum + native drift envelope $\to$ drift-aware deadline $\to$ held-out native coasting interval-ICV. \textbf{PASS-CANDIDATE}, confirmed across seed and shard.
\item \textbf{Stage 2A} --- a cue-conditioned reactive-contrast theorem-bed where all schedulers share the same model; tests whether expected-belief reactive under-prices the eventful tail. \textbf{mechanism PASS.}
\item \textbf{Stage 2B} --- a state-dependent lead-time bench. \textbf{NULL.}
\item \textbf{Stage 2C} --- learned-regime replaceability: a feasibility audit (2C-0: native traces lack hazard semantics, so the full three-axis replaceability test is blocked) and a validity/budget test (2C-V). \textbf{validity-only warning.}
\item \textbf{E0 (exploratory)} --- a partial-reset regime to activate the spectral term. \textbf{NULL.}
\end{itemize}

\section{Results}
\label{sec:results}

\subsection{Stage 1b-lite --- frozen 3D VN-JEPA (seed + shard confirmed)}
\label{sec:r-1b}

% Table 2 -- Stage 1b-lite seed + shard confirmation
\begin{table}[t]
\centering
\small
\caption{Stage 1b-lite --- seed + shard confirmation on a frozen 3D VN-JEPA model. Each row calibrates $\epscert$ on the calibration split and evaluates once on held-out coasting intervals; all three interval-ICV upper bounds fall below the pre-registered $0.15$ line. Robustness is in the protocol across seed and shard, not in a single lucky checkpoint.}
\label{tab:s1b}
\begin{tabular}{llcccc}
\toprule
confirmation & ckpt & cal\,$\to$\,test shard & $\epscert$ & $\Teq$ & held-out $U_{95}$ \\
\midrule
fresh-blind   & r2 & 000\,$\to$\,001 & 1.15 & 3 & \textbf{0.092} \\
seed confirm  & r1 & 000\,$\to$\,001 & 1.10 & 2 & \textbf{0.139} \\
shard confirm & r2 & 000\,$\to$\,002 & 0.95 & 2 & \textbf{0.095} \\
\bottomrule
\end{tabular}
\end{table}

\noindent Table~\ref{tab:s1b} reports the confirmation grid. All three held-out interval-ICV upper bounds are below the $0.15$ line; each $\epscert$ is independently calibrated on its calibration split. The protocol --- not a single lucky checkpoint --- is robust across seed and shard. \emph{This establishes that the sensing-clock primitive instantiates on a real equivariant world model, not only a synthetic bench.} An exploratory structure ablation (Appendix~\ref{app:l2}) further indicates that this certifiability is what the equivariant structure buys: matched non-equivariant models mostly admit no certified deadline at all, in 2D by one-step error saturation and in 3D by a frozen (motion-insensitive) representation.

\subsection{Stage 2A --- reactive contrast (matched budget $\approx0.068$)}
\label{sec:r-2a}

% Table 3 -- Stage 2A reactive contrast (matched budget ~0.068)
\begin{table}[t]
\centering
\small
\caption{Stage 2A --- reactive contrast at matched budget $\approx0.068$. The certified clock (Eq-spec) is valid on the deployment distribution ($U_{95}=0.042$) and reduces eventful-tail violations relative to exact-mixture expected-belief scheduling ($0.163$ vs $0.364$); MB-EIG needs $\approx3\times$ budget to recover the tail protection. Risk-sensitive (MB-CVaR) and oracle-robust (MB-WorstCase) exact-model schedulers track Eq-spec closely. The false-exclusion decomposition of the residual tail is reported inline in \S5.2.}
\label{tab:s2a}
\begin{tabular}{lccp{0.30\linewidth}}
\toprule
Policy & overall ICV $U_{95}$ (validity) & eventful (tail) ICV $U_{95}$ & role \\
\midrule
\textbf{Eq-spec} & \textbf{0.042} & 0.163 & certified cue-conditioned clock \\
\textbf{MB-EIG}  & 0.062 & \textbf{0.364} & expected-belief reactive \\
Uniform          & 0.131 & 0.715 & periodic baseline \\
MB-CVaR          & 0.045 & 0.165 & risk-sensitive ($\approx$ Eq-spec) \\
MB-WorstCase     & 0.040 & 0.169 & oracle robust upper bound ($\approx$ Eq-spec) \\
\bottomrule
\end{tabular}
\end{table}

\noindent Table~\ref{tab:s2a} reports the matched-budget contrast; Figure~\ref{fig:tail} visualizes the eventful-tail separation. A \textbf{false-exclusion decomposition} of the Eq-spec eventful intervals explains its residual tail. Here $p_R(c)$ is the scheduler's posterior that the upcoming interval is in the risky (eventful) mode given its pre-interval cue $c$, and $p_{\mathrm{adm}}$ is the admission threshold, calibrated as the $\alpha_{\mathrm{mode}}=0.05$ quantile of $p_R$ over truly risky calibration intervals; by construction at most an $\alpha_{\mathrm{mode}}$-mass of truly risky intervals falls below $p_{\mathrm{adm}}$ --- the designed false-exclusion budget.
% TODO: promote to a numbered table if desired (kept inline to preserve Table 1--4 numbering).
\begin{center}
\small
\begin{tabular}{p{0.30\linewidth}ccp{0.38\linewidth}}
\toprule
subset & $n$ & eventful ICV & interpretation \\
\midrule
$p_R<p_{\mathrm{adm}}$ (calibrated false-exclusion mass) & 92 & 0.75 & pre-registered $\alpha_{\mathrm{mode}}=0.05$ coverage cost \\
$p_R\ge p_{\mathrm{adm}}$ (admitted risky set) & 509 & \textbf{0.031} & deadline controls the tail \\
all intervals (deployment) & --- & \textbf{0.042} & primary validity \\
\bottomrule
\end{tabular}
\end{center}

\noindent $\Rightarrow$ Eq-spec's residual tail leakage is the \emph{designed} false-exclusion budget, not a deadline failure. \textbf{Honest caveat (do not over-read):} the separation is against \emph{expected-belief} MB-EIG specifically; risk-sensitive (MB-CVaR, $0.165$) and robust (MB-WorstCase, $0.169$) exact-model schedulers track Eq-spec closely --- expected in the exact-model limit, and the reason the replaceability question is deferred to the learned regime (\S5.4). Appendix~\ref{app:mbeig} records the exact switch-time mixture form of the MB-EIG baseline (the strongest expected-belief scheduler in this bed).

\begin{figure}[t]
\centering
\includegraphics[width=0.9\linewidth]{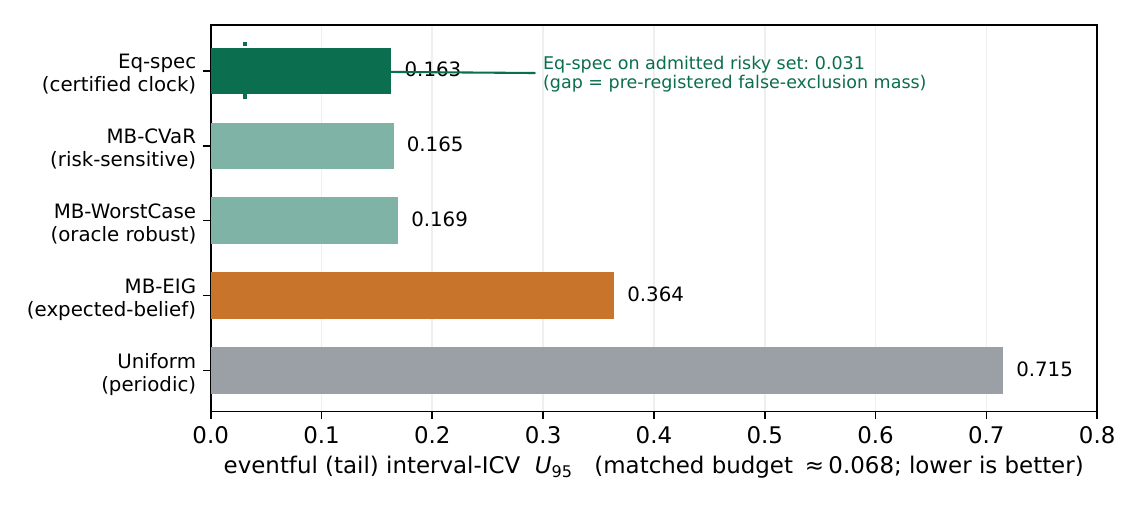}
\caption{\textbf{Stage 2A eventful-tail separation at matched budget ($\approx0.068$).} Eventful (tail) interval-ICV $U_{95}$ per policy; lower is better. The certified clock (Eq-spec) reduces tail violations relative to exact-mixture expected-belief scheduling (MB-EIG) and the periodic baseline, while risk-sensitive (MB-CVaR) and oracle-robust (MB-WorstCase) schedulers track it closely, as expected in the exact-model limit. The dashed marker shows Eq-spec's eventful ICV restricted to its admitted risky set ($0.031$): the residual gap to the full bar is the pre-registered false-exclusion mass, not deadline failure.}
\label{fig:tail}
\end{figure}

\subsection{Stage 2B --- lead-time contrast (NULL)}
\label{sec:r-2b}

On a state-dependent late-sensing bench (matched budget $\approx0.50$, last-safe window $\approx2$--$4$ steps):
\begin{center}
\small
\begin{tabular}{lc}
\toprule
Policy & $\Pr[\Lsafe>0\mid\text{hazard}]$ \\
\midrule
Eq-spec          & 0.955 \\
Uniform          & 0.957 \\
CUSUM (budgeted) & 0.947 \\
\bottomrule
\end{tabular}
\end{center}

\noindent No lead-time separation: at matched budget, periodic and residual-reactive schedulers are as timely as the certified clock --- indeed the uniform baseline is marginally higher. The pre-checks pass (the bench is non-degenerate), but the geometry that satisfies them forces a high sensing rate ($B\approx0.5$, every ${\sim}2$ steps) over a short horizon, where periodic sensing already covers the hazard. \textbf{Stage 2B returns a null result.} A post-hoc structural analysis (Appendix~\ref{app:bench2b}) sharpens this null: with homogeneous starts, any a-priori deadline coincides step-for-step with budget-matched periodic polling, and in this 1D family the geometry that would create deadline diversity simultaneously destroys the late-onset structure reactive baselines need to fail --- the lead-time question requires richer hazard structure than this bench can pose.

\subsection{Stage 2C --- learned-regime replaceability (validity-only warning)}
\label{sec:r-2c}

The feasibility audit (2C-0, read-only) finds the VN-JEPA coasting corpus has \textbf{no native hazard / last-safe / actionable-intervention semantics}, so the full three-axis replaceability test (validity + lead-time + budget) is \textbf{blocked}; only validity/budget replaceability is testable. We also confirm by code inspection that the deployed deadline is exactly $\Tdrift$ (drift-only); $\lambda$ never enters the deployed horizon. The validity/budget test (2C-V) compares against non-equivalent, non-spectral comparators at matched budget on three confirmed splits\footnote{The Eq-spec-drift $U_{95}$ values re-evaluate the same frozen protocol in an independent run; the $0.140$ here vs $0.139$ in Table~\ref{tab:s1b} on r1/sh1 is a rounding-level difference.} (comparator C, a drift-only horizon, is an equivalent implementation of the deployed clock and is therefore excluded as non-distinct):
\begin{center}
\small
\setlength{\tabcolsep}{5pt}
\begin{tabular}{lcccc}
\toprule
split & Eq-spec-drift $U_{95}$ & A: budgeted-CUSUM & B: empirical-conformal & D: robust-CVaR \\
\midrule
r2/sh1 ($\Teq=3$) & 0.092 & over-budget & \textbf{matched (0.093)} & over-budget \\
r1/sh1 ($\Teq=2$) & 0.140 & matched     & matched                 & matched \\
r2/sh2 ($\Teq=2$) & 0.095 & matched     & matched                 & matched \\
\bottomrule
\end{tabular}
\end{center}

\noindent A purely empirical conformal horizon (using neither $\lambda$ nor the drift envelope) \textbf{matches the deployed clock on validity and budget on all three splits}; CUSUM and robust-CVaR match on the short-horizon splits. \textbf{Verdict: validity-only replaceability warning.} The deployed drift-envelope clock is empirically replaceable on validity/budget by a non-spectral conformal horizon in this short-horizon regime; full non-replaceability is unresolved (lead-time semantics absent). This does not invalidate the sensing-clock primitive; it shows that, in the short-horizon full-resensing VN-JEPA regime, the deployed clock's validity guarantee can be reproduced by a simpler conformal horizon. The primitive's unresolved question is non-replaceability in longer-horizon or partial-sensing regimes.

\subsection{E0 --- partial-reset spectral-edge exploration (NULL)}
\label{sec:r-e0}

We additionally explored a partial-reset regime designed to activate the $C e^{\lambda h}e_0$ term. The spectral term became nonzero, but after correcting two false-positive traps --- linearized propagation artifacts and unmatched sensing budgets --- it did not yield a clean budget-matched advantage over empirical conformal horizons. E0 therefore narrows the role of $\lambda$: in the current VN-JEPA regime it is an audit interface rather than an empirically necessary scheduling term. Table~\ref{tab:boundary} summarizes the claim boundary.

% Table 4 -- boundary / negative-results summary (KEEP IN MAIN TEXT, not appendix)
\begin{table}[t]
\centering
\small
\caption{Boundary / negative-results summary. These negative results narrow the claim rather than invalidate the primitive: the certified sensing-clock primitive and the drift-aware deployment method are established; empirical dominance of the spectral term over non-spectral schedulers is deferred to longer-horizon / partial-sensing / lead-time regimes.}
\label{tab:boundary}
\begin{tabular}{p{0.30\linewidth}p{0.27\linewidth}p{0.34\linewidth}}
\toprule
Question & Result & Consequence \\
\midrule
Lead-time over residual-reactive?           & Stage 2B NULL (\S5.3)      & not established \\
Learned validity/budget non-replaceability? & Stage 2C-V warning (\S5.4) & empirical conformal matches deployed clock \\
Spectral-term deployment edge?              & E0 NULL (\S5.5)           & $\lambda$ remains an audit interface in this regime \\
Full C2 (empirical dominance)?              & pending / not established  & no dominance claim over reactive / conformal / robust schedulers \\
\bottomrule
\end{tabular}
\end{table}

\noindent These negatives jointly define the claim boundary: the primitive and drift-aware method are established; \emph{empirical dominance of the spectral term over non-spectral schedulers is not}, and is deferred to longer-horizon / partial-sensing / lead-time regimes.

\section{Discussion}
\label{sec:disc}

\paragraph{What is established.}
(i) A certified sensing-clock primitive that controls interval-simultaneous certificate violation; (ii) a drift-aware deployment method, with the empirical finding that on-manifold $\lambda$ alone over-states the horizon and the native rollout-drift envelope carries the deployed guarantee; (iii) instantiation on a real frozen 3D VN-JEPA equivariant world model, confirmed across seed and shard; (iv) a reactive-contrast theorem-bed showing expected-belief scheduling under-prices the eventful tail at matched budget.

\paragraph{What is not established.}
(i) Actionable lead-time over residual-reactive schedulers (Stage 2B null on exact synthetic; not testable on native VN-JEPA traces). (ii) Non-replaceability: in the learned short-horizon regime a non-spectral empirical conformal horizon matches the deployed clock on validity and budget (Stage 2C-V warning). (iii) An empirical edge for the spectral term itself (E0 null). We explicitly do \textbf{not} claim the spectral clock empirically dominates reactive, conformal, or robust schedulers.

\paragraph{When might $\lambda$ matter?}
The spectral term is dormant precisely because deployment fully re-senses ($e_0\approx0$) over short horizons. Regimes that could re-activate it --- non-trivial / state-dependent post-sense error (partial sensing), longer coasting horizons, or imperfect-model settings where audited spectra and drift envelopes diverge from empirical conformal calibration --- are natural future work. Our partial-reset probe (\S5.5) activated the term but found no budget-matched edge, so we present this cautiously.

\paragraph{Why this is still a new primitive.}
Even where simpler conformal horizons match the deployed clock numerically, the contribution is the \emph{framing and method}: turning a certified validity horizon into an operational re-sensing deadline, the drift-aware correction that makes it deployable on a real equivariant world model, and the pre-registered reactive-contrast methodology (including the eventful-tail / false-exclusion analysis) under which expected-belief scheduling is shown to under-price tail risk. The certificate-as-clock interface --- connecting certified world models to action --- is the durable contribution.

\paragraph{Anticipated reviewer questions.}

\emph{Q1 --- If a non-spectral conformal horizon reproduces the deployed clock (\S5.4), why is this valuable?}
The deliverable is the primitive, the method, and the methodology --- not a number. The certificate-as-clock interface, the drift-aware correction that makes the horizon deployable on a real equivariant world model, and the pre-registered reactive-contrast protocol (eventful-tail / false-exclusion decomposition) stand independently of whether a conformal horizon happens to match the deadline in one regime. Replaceability is demonstrated only on validity and budget, only in the short-horizon full-resense regime, and only against schedulers that themselves lack a predictive-validity certificate; we report it as a warning, not a refutation, and name the regimes where the question reopens. The primitive also has a theoretical identity: in a Gaussian toy the certified clock and the epistemic-threshold scheduler of active inference parameterize the same schedule family, and the deployed drift-aware clock is that family's \emph{calibrated} member (Proposition~\ref{prop:efe}, Appendix~\ref{app:efe}).

\emph{Q2 --- If the deployed clock is drift-only, why keep the spectral / Lyapunov framing?}
The title foregrounds \emph{drift-aware deadlines} precisely because the deployed term is the drift envelope. The spectral rate is retained as the \textbf{derivation and audit interface}: it is how a certificate horizon is read off an equivariant world model, it supplies the on- vs off-manifold expansion diagnostic ($\lambda\approx0.07$ on-manifold vs $\approx3.5$ off-manifold), and it is the term that partial-sensing or longer-horizon regimes would reactivate ($e_0\not\approx0$). We are explicit (\S3, \S5.5) that $\lambda$ is \emph{not} the deployed active term in the full-resense regime; consistently, the shared certified/epistemic countdown of the Gaussian toy is exactly the spectral horizon $T_\lambda$ (Corollary~\ref{cor:countdown}), while the exploratory ablation finds the audited rate mis-states the deployable horizon in \emph{opposite} directions across regimes (Appendix~\ref{app:l2}) --- both facts point to the calibrated envelope as the deployment term and to $\lambda$ as the derivation interface.

\emph{Q3 --- Stage 2A only separates from expected-belief MB-EIG; risk-sensitive (MB-CVaR) and robust (MB-WorstCase) schedulers tie. Is the win real?}
Yes, with a stated scope. In the exact-model limit, risk-sensitive and robust schedulers provably approach the certified clock --- MB-WorstCase is essentially the oracle robust reference, so a tie with it is the \emph{expected} and desirable outcome, not a weakness. The honest claim is narrow: \emph{expected-belief} (mean-information-gain) scheduling under-prices the eventful tail ($0.36$ vs $0.16$ at matched budget), and the certified clock reaches the robust/risk-sensitive frontier without oracle access to the worst case. Whether the certified clock can \emph{separate} from CVaR / worst-case once exact-model equivalence breaks is exactly the learned-regime question Stage 2C defers.

\section{Related Work}
\label{sec:related}

We position the primitive against five neighboring families; the carve is narrow --- we make the validity horizon \emph{itself} the trigger --- and we do not claim to outperform these families in general.

\paragraph{Active sensing / value-of-information / POMDP with guarantees.}
Measurement simplification in $\rho$-POMDPs with performance guarantees~\citep{yotam2023rhopomdp} schedules and simplifies observations by belief-space value while bounding the loss. We differ by deriving a proactive deadline from an audited world-model validity horizon rather than from online belief-space planning: the trigger is certified coasting validity, not task value or belief utility.

\paragraph{Reactive information-gain / entropy scheduling.}
Information-driven active perception selects sensor queries by the $k$-step conditional entropy of future-state safety under a query budget~\citep{udupa2026kstep}. These schedulers act on predicted uncertainty as it accrues. Our Stage~2A shows a separation from \emph{expected-belief} (mean-information-gain) scheduling (MB-EIG) specifically, on eventful-tail validity at matched budget; we do \emph{not} claim a separation from all reactive schedulers --- risk-sensitive and robust variants track our clock in the exact-model limit (\S5.2).

\paragraph{Conformal / calibrated uncertainty.}
Conformal prediction gives distribution-free, finite-sample coverage~\citep{angelopoulos2021conformal} and underlies a strong non-spectral baseline. We are explicit that conformal horizons are strong here: in the frozen VN-JEPA short-horizon regime, an empirical conformal horizon matches our deployed clock on validity and budget (Stage~2C-V, \S5.4). Our contribution is the certificate-as-clock \emph{framing and protocol}, not numerical superiority over a conformal horizon in this regime; the conformal comparison is reported as a warning, not a defeat.

\paragraph{Active inference / expected free energy.}
Belief-space control via active inference schedules measurements by an epistemic (expected-free-energy) objective~\citep{sargun2026cancer}. Such schedulers are objective-driven and, in this line, domain-specific and certificate-free; ours is a certificate-grounded interval guarantee rather than a free-energy heuristic. The bridge is in fact formal: in a Gaussian toy the two trigger rules parameterize the same schedule family, with the deployed drift-aware clock as its calibrated member (Appendix~\ref{app:efe}).

\paragraph{World-model verification and certified world models.}
A growing line verifies imagined rollouts or controls robustly: closed-loop world-model verification~\citep{geng2026dwm}, event-verified world models~\citep{wang2026evwm}, and robust MPC for certified manipulation~\citep{li2026robustmpc}. This work builds on a certified-horizon line that supplies the faithful-certificate and prediction-horizon machinery~\citep{wang2026certifiedwm}; we turn that horizon into an operational re-sensing deadline. Versus detector-style residual monitors: a detector alarms \emph{after} seeing a residual, our clock senses \emph{before} one arrives. Table~\ref{tab:related} (Appendix~\ref{app:relmap}) maps the families.

\section*{Reproducibility Statement}
All confirmatory results (Stages 0A--2C) come from a pre-registered, frozen experiment line: certificate scales, tolerances, drift envelopes, and admission thresholds are fixed on calibration splits, and each test split is evaluated once with no retuning. Exploratory results are labeled as such and were replicated on fresh splits before being reported: the partial-reset probe (Appendix~\ref{app:e0}), the structure ablation (Appendix~\ref{app:l2}), and the lead-time geometry analysis (Appendix~\ref{app:bench2b}); negative results are reported rather than discarded. Remaining future work: longer-horizon / partial-sensing $e_0$ regimes on the real model, and richer hazard structure for the lead-time question (Appendix~\ref{app:bench2b}).

\bibliographystyle{plainnat}
\bibliography{refs}

\begin{thebibliography}{8}
\providecommand{\natexlab}[1]{#1}
\providecommand{\url}[1]{\texttt{#1}}
\expandafter\ifx\csname urlstyle\endcsname\relax
  \providecommand{\doi}[1]{doi: #1}\else
  \providecommand{\doi}{doi: \begingroup \urlstyle{rm}\Url}\fi

\bibitem[Angelopoulos and Bates(2021)]{angelopoulos2021conformal}
Anastasios~N. Angelopoulos and Stephen Bates.
\newblock A gentle introduction to conformal prediction and distribution-free
  uncertainty quantification.
\newblock arXiv:2107.07511, 2021.

\bibitem[Geng et~al.(2026)Geng, Zhou, Zhang, Pan, Tran, and
  Ruchkin]{geng2026dwm}
Yuang Geng, Zhuoyang Zhou, Zhongzheng Zhang, Siyuan Pan, Hoang-Dung Tran, and
  Ivan Ruchkin.
\newblock Deterministic world model for closed-loop verification of end-to-end
  vision-based controller.
\newblock arXiv:2512.08991, 2026.

\bibitem[Li et~al.(2026)Li, Fang, Polisetti, Song, and Chou]{li2026robustmpc}
Wei-Chen Li, Jeffrey Fang, Sasanka Polisetti, Yuexi Song, and Glen Chou.
\newblock Robustness without wrinkles: Parallel simulation and robust {MPC} for
  certified deformable manipulation.
\newblock arXiv:2606.14188, 2026.

\bibitem[Sargun et~al.(2026)Sargun, Tulay, and Koksal]{sargun2026cancer}
Deniz Sargun, H.~Bugra Tulay, and C.~Emre Koksal.
\newblock Belief-space control for personalized cancer treatment via active
  inference.
\newblock arXiv:2606.10376, 2026.

\bibitem[Udupa and Fu(2026)]{udupa2026kstep}
Sumukha Udupa and Jie Fu.
\newblock Information-driven active perception for k-step predictive safety
  monitoring.
\newblock arXiv:2603.23450, 2026.

\bibitem[Wang(2026)]{wang2026certifiedwm}
Hongbo Wang.
\newblock Certified world models: Predictability across configuration, horizon,
  and resolution.
\newblock arXiv:2606.13092, 2026.

\bibitem[Wang et~al.(2026)Wang, Jie, Yan, Zhou, and Heng]{wang2026evwm}
Kailin Wang, Haoxiang Jie, Yaoyuan Yan, Jiacheng Zhou, and Zhiyou Heng.
\newblock {EV-WM}: Event-verified world models for long-horizon robotic
  manipulation.
\newblock arXiv:2606.13053, 2026.

\bibitem[Yotam and Indelman(2023)]{yotam2023rhopomdp}
Tom Yotam and Vadim Indelman.
\newblock Measurement simplification in {$\rho$-POMDP} with performance
  guarantees.
\newblock arXiv:2309.10701, 2023.

\end{thebibliography}

\appendix
\section{E0: Partial-Reset Exploration --- Protocol, Traps, and Budget-Matched Results}
\label{app:e0}

\textbf{Status.} Exploratory; \emph{not} pre-registered; not claim-bearing. We report it in full for transparency, because the two failure modes below are easy to fall into when evaluating spectral scheduling terms, and because a transparent negative is more informative than an asserted one.

\paragraph{Protocol.}
To activate the spectral term $C e^{\lambda h} e_0$ (dormant under full re-sensing, \S3), each sense applies a \emph{partial reset}: the post-sense latent retains a fraction of the pre-sense coasting error, $e_0^{(i)} = \rho\, e_{\mathrm{pre}}^{(i)}$ with $\rho \in \{0.5, 0.8\}$, so $e_0$ varies per coasting interval (state-dependent, not a constant offset). Policies compared on the frozen VN-JEPA corpus: the drift-aware clock with the spectral term active ($\lambda$+drift), the drift-only clock, and the empirical conformal horizon. Coasting predictions are re-rolled from the perturbed post-sense latent, and violations are always measured against the true $\epscert$.

\paragraph{Trap 1: linearized propagation (circular).}
A first implementation propagated the perturbed error analytically, $r_h \approx r_h^{\mathrm{full\text{-}reset}} + e_0 e^{\lambda h}$. This bakes the spectral model into the data-generating process, so the $\lambda$-aware policy wins by construction. Fix: faithfully re-roll the frozen predictor from the perturbed latent; the apparent advantage disappears.

\paragraph{Trap 2: unmatched budget.}
With certified triggers left free, the $\lambda$+drift policy senses almost every step under partial reset (budget $0.87$--$0.99$ vs $0.34$ for drift-only), producing an apparent tail win that is purely purchased budget. Fix: throttle every policy to a matched target budget $B^*$ by relaxing its trigger with a factor $\gamma \ge 1$. Throttled variants are \emph{allocation diagnostics}, not certified deployments (relaxing the trigger voids the validity guarantee); the comparison is interval-ICV at matched budget against the true $\epscert$.

\paragraph{Budget-matched results.}
Table~\ref{tab:e0} reports held-out interval-ICV upper bounds ($U_{95}$) at matched budget.

\begin{table}[h]
\centering
\small
\caption{E0 budget-matched partial-reset results (exploratory; not pre-registered). The spectral term is active ($e_0 \neq 0$), yet $\lambda$+drift shows no clean budget-matched edge over the empirical conformal horizon: at $B^*{=}0.333$ conformal is lower; at $B^*{=}0.5$ the nominal difference ($0.020$) is below the pre-declared $0.05$ promotion margin. At $\rho{=}0.8$ the post-sense error is too large for any throttle $\gamma \le 2.76$ to reach the target budget, so the cell is infeasible rather than evidence.}
\label{tab:e0}
\begin{tabular}{cccccl}
\toprule
$\rho$ & $B^*$ & $\lambda$+drift $U_{95}$ & drift-only $U_{95}$ & conformal $U_{95}$ & verdict \\
\midrule
0.5 & 0.333 & 0.114 & 0.218 & \textbf{0.102} & NULL (conformal $\le$ $\lambda$+drift) \\
0.5 & 0.500 & \textbf{0.057} & --- (budget mismatch) & 0.077 & NULL (margin $0.020 < 0.05$) \\
0.8 & all   & --- & --- & --- & infeasible ($\gamma \le 2.76$ cannot reach $B^*$) \\
\bottomrule
\end{tabular}
\end{table}

\paragraph{Verdict.}
The partial-reset regime does activate the spectral term, but after removing both traps there is no budget-matched advantage over a non-spectral empirical conformal horizon. This is the basis for the E0 row of Table~\ref{tab:boundary} and for the narrowed reading of $\lambda$ as an audit interface in this regime (\S5.5). Promotion of any future positive in this direction requires fresh-split / fresh-seed replication under a pre-declared margin.

\section{The Sensing Countdown as a Calibrated Epistemic-Threshold Scheduler}
\label{app:efe}

This appendix gives the sensing-clock primitive a theoretical identity: in a Gaussian toy, the certified clock and the epistemic (expected-information-gain) scheduler of active inference are the \emph{same} family of sensing schedules, and the deployed drift-aware clock is that family's \emph{calibrated} member. The result is deliberately narrow (Remarks~\ref{rem:vector}--\ref{rem:scope}); it does not rescue the empirical replaceability boundary of \S5.4--5.5.

\paragraph{Setup.}
A scalar latent coasts as $z_{h+1} = a\,z_h + w_h$, $w_h \sim \mathcal N(0,\sigma_w^2)$. After a sense the agent holds $b_0=\mathcal N(\mu_0,P_0)$ and propagates open-loop, $b_h=\mathcal N(\mu_h,P_h)$ with $P_{h+1}=a^2P_h+\sigma_w^2$. A sense at $h$ returns $o_h=z_h+v_h$, $v_h\sim\mathcal N(0,\sigma_o^2)$. Under model correctness the coasting error satisfies $e_h\sim\mathcal N(0,P_h)$, so with $q_\alpha$ the standard-normal $(1-\alpha/2)$-quantile the certified clock triggers at
\begin{equation}
\tau_{\rm cert}(\epsilon)=\inf\{h\ge1:\ q_\alpha^2 P_h>\epsilon^2\},
\end{equation}
while the epistemic-threshold scheduler triggers when the expected information gain of sensing,
$\mathrm{IG}(h)=I(z_h;o_h)=\tfrac12\ln\!\big(1+P_h/\sigma_o^2\big)$,
first exceeds a threshold $c$:
\begin{equation}
\tau_{\rm EFE}(c)=\inf\{h\ge1:\ \mathrm{IG}(h)>c\}.
\end{equation}

\begin{proposition}[threshold equivalence]
\label{prop:efe}
Fix $\alpha\in(0,1)$ and $\sigma_o^2>0$, and let
$c(\epsilon)=\tfrac12\ln\!\big(1+\epsilon^2/(q_\alpha^2\sigma_o^2)\big)$.
Then for \emph{every} variance path $(P_h)_{h\ge1}$ --- regardless of the dynamics generating it ---
$\tau_{\rm EFE}(c(\epsilon))=\tau_{\rm cert}(\epsilon)$,
and $\epsilon\mapsto c(\epsilon)$ is a strictly increasing bijection of $(0,\infty)$ onto itself. Certified tolerances and epistemic thresholds therefore parameterize the same family of sensing schedules.
\end{proposition}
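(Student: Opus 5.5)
The plan is to show that, for each $h$, the two trigger predicates are equivalent, and then read off equality of the first-hitting times. Both stopping rules are of the form $\inf\{h\ge1:\ \text{predicate}(P_h)\}$, where the predicate depends on $h$ only through the scalar $P_h\ge0$. So it suffices to prove a pointwise statement: for every $p\ge0$,
\[
\tfrac12\ln\bigl(1+p/\sigma_o^2\bigr)>c(\epsilon)\iff q_\alpha^2 p>\epsilon^2 .
\]
Once this holds, the two index sets $\{h\ge1:\ \mathrm{IG}(h)>c(\epsilon)\}$ and $\{h\ge1:\ q_\alpha^2P_h>\epsilon^2\}$ coincide. Hence their infima coincide, including the case where both are empty and both times equal $+\infty$. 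Nothing about the recursion $P_{h+1}=a^2P_h+\sigma_w^2$ is used, which is why the claim holds for every variance path.

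For the pointwise equivalence I would write $g(p)=\tfrac12\ln(1+p/\sigma_o^2)$, a strictly increasing function on $[0,\infty)$ because $\sigma_o^2>0$ and $\ln$ is strictly increasing. Then observe that $c(\epsilon)=g(\epsilon^2/q_\alpha^2)$ by definition. Strict monotonicity gives $g(p)>g(\epsilon^2/q_\alpha^2)\iff p>\epsilon^2/q_\alpha^2$. Multiplying by $q_\alpha^2$ gives $q_\alpha^2p>\epsilon^2$. The one point needing care is that $q_\alpha>0$, so this multiplication preserves the strict inequality. This holds because $\alpha\in(0,1)$ makes $1-\alpha/2\in(1/2,1)$, so the standard-normal quantile is strictly positive. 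Strictness of both triggers is preserved, so there are no tie-breaking discrepancies at equality.

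For the bijection claim, $c(\epsilon)$ is the composition $\epsilon\mapsto\epsilon^2/q_\alpha^2\mapsto g(\cdot)$. The first map is a strictly increasing bijection of $(0,\infty)$ onto itself. The second, $g$ restricted to $(0,\infty)$, is strictly increasing and continuous with $g(0^+)=0$ and $g(p)\to\infty$. So $g$ maps $(0,\infty)$ onto $(0,\infty)$, with explicit inverse $c\mapsto q_\alpha\sigma_o\sqrt{e^{2c}-1}$, which confirms surjectivity.

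The final sentence of the proposition then follows directly. Every threshold $c>0$ equals $c(\epsilon)$ for a unique tolerance $\epsilon$, and vice versa, so the two families $\{\tau_{\rm cert}(\epsilon)\}_{\epsilon>0}$ and $\{\tau_{\rm EFE}(c)\}_{c>0}$ are identical as sets of schedules. The main obstacle is the positivity and strictness bookkeeping, namely $q_\alpha>0$ and $\sigma_o^2>0$; everything else is a single monotone reparametrization.
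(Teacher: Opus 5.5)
Your proof is correct and follows essentially the same route as the paper: you show the two trigger predicates are equivalent at every $h$ via strict monotonicity of $x\mapsto\ln(1+x/\sigma_o^2)$, so the first-crossing times agree, including when both equal $+\infty$. Your explicit check that $q_\alpha>0$ and your explicit inverse $c\mapsto q_\alpha\sigma_o\sqrt{e^{2c}-1}$ just fill in details that the paper calls immediate.
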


\begin{proof}
For each $h$: $\mathrm{IG}(h)>c(\epsilon)\iff\ln(1+P_h/\sigma_o^2)>\ln(1+\epsilon^2/(q_\alpha^2\sigma_o^2))\iff P_h>\epsilon^2/q_\alpha^2\iff q_\alpha^2P_h>\epsilon^2$, by strict monotonicity of $x\mapsto\ln(1+x/\sigma_o^2)$. The trigger events coincide at every $h$, hence so do the first-crossing times (including $\tau=\infty$ on both sides). Bijectivity is immediate.
\end{proof}

\begin{corollary}[shared countdown = the spectral horizon]
\label{cor:countdown}
Under the expanding recursion ($a^2>1$), $P_h=a^{2h}\big(P_0+\kappa\big)-\kappa$ with $\kappa=\sigma_w^2/(a^2-1)$ is strictly increasing, so both schedulers reduce to one deterministic countdown whose deadline is, up to the additive offset $\kappa$,
\begin{equation}
T(\epsilon)\;\approx\;\frac{1}{\lambda}\,\ln\frac{\epsilon}{q_\alpha\,\tilde e_0},
\qquad \lambda=\ln a,\quad \tilde e_0=\sqrt{P_0+\kappa},
\end{equation}
i.e.\ exactly the spectral horizon $T_\lambda(\epsilon)\sim\ln(\epsilon/e_0)/\lambda$ of \S3.
\end{corollary}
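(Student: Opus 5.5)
The proof has three steps: solve the variance recursion in closed form, deduce that it is strictly increasing, and then use Proposition~\ref{prop:efe} to reduce both trigger rules to one threshold crossing that can be solved for $h$.

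\textbf{Closed form.} The recursion $P_{h+1}=a^2P_h+\sigma_w^2$ is affine with fixed point $-\kappa$, where $\kappa=\sigma_w^2/(a^2-1)$. To see this, note that $-\kappa$ satisfies $-\kappa=a^2(-\kappa)+\sigma_w^2$. Setting $Q_h=P_h+\kappa$, the recursion becomes $Q_{h+1}=a^2Q_h$. Hence $Q_h=a^{2h}(P_0+\kappa)$, which gives the stated formula $P_h=a^{2h}(P_0+\kappa)-\kappa$. This can be checked by a one-line induction.

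\textbf{Monotonicity.} Since $a^2>1$, we have $\kappa>0$. With $P_0\ge0$, this gives $P_0+\kappa>0$. Therefore $h\mapsto a^{2h}(P_0+\kappa)$ is strictly increasing, and so is $P_h$.

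\textbf{Reduction to one countdown.} Proposition~\ref{prop:efe} gives $\tau_{\rm EFE}(c(\epsilon))=\tau_{\rm cert}(\epsilon)$ for every variance path, so we only need to analyse $\tau_{\rm cert}$. Because $P_h$ is strictly increasing and deterministic, the set $\{h:\ q_\alpha^2P_h>\epsilon^2\}$ is an up-set. Its first element is therefore a deterministic countdown, determined by the single crossing of $P_h$ through $\epsilon^2/q_\alpha^2$.

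\textbf{Solving the crossing.} The inequality $a^{2h}(P_0+\kappa)>\epsilon^2/q_\alpha^2+\kappa$ is equivalent to
\[
h>\frac{1}{2\ln a}\ln\frac{\epsilon^2/q_\alpha^2+\kappa}{P_0+\kappa}.
\]
Dropping the additive $\kappa$ in the numerator is what the statement calls ``up to the additive offset $\kappa$''. With that, the threshold becomes $\frac{1}{\lambda}\ln\frac{\epsilon}{q_\alpha\tilde e_0}$, where $\lambda=\ln a$ and $\tilde e_0=\sqrt{P_0+\kappa}$. The deadline $T(\epsilon)$ is this threshold up to integer rounding, since $\tau_{\rm cert}$ is the first integer exceeding it. This matches the form of $T_\lambda$ in \S3, with $e_0$ replaced by $q_\alpha\tilde e_0$.

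\textbf{Main obstacle.} The only delicate point is stating precisely what ``$\approx$'' means. I would make it explicit in two ways. First, the exact deadline is $\lceil\cdot\rceil$ of the displayed expression with $\epsilon^2/q_\alpha^2+\kappa$ in the numerator. Second, the relative error from dropping $\kappa$ is $\frac{1}{2\lambda}\ln\bigl(1+\kappa q_\alpha^2/\epsilon^2\bigr)$, which vanishes as $\epsilon^2/q_\alpha^2\gg\kappa$. I would also note the edge case where $q_\alpha^2P_1>\epsilon^2$ already holds: the countdown is then $1$, and the formula should be read with a floor at $1$.
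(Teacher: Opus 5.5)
Your proposal is correct and follows the reasoning the paper packs into the statement itself (no separate proof is given): closed form via the fixed point $-\kappa$, strict monotonicity from $P_0+\kappa>0$, Proposition~\ref{prop:efe} to identify $\tau_{\rm EFE}$ with $\tau_{\rm cert}$, and solving the single crossing with $\kappa$ dropped from the numerator. Only cosmetic fixes are needed. Writing $x$ for your displayed threshold, the exact deadline is $\max\{1,\lfloor x\rfloor+1\}$ because the trigger inequality is strict, which differs from your $\lceil x\rceil$ when $x$ is a positive integer. The discarded term $\frac{1}{2\lambda}\ln(1+\kappa q_\alpha^2/\epsilon^2)$ is an additive shift in $h$ rather than a relative error, and for $a<-1$ one should read $\lambda=\ln|a|$.
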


\begin{remark}[the deployed clock is the calibrated member]
\label{rem:calibrated}
Both sides of Proposition~\ref{prop:efe} take the \emph{model-trusting} variance $P_h$ as input. Under model error the realized coasting error is no longer $\mathcal N(0,P_h)$: the epistemic scheduler keeps triggering on the model's $P_h$ (mis-timed), whereas the deployed clock replaces $q_\alpha\sqrt{P_h}$ by the held-out calibrated drift envelope $b_h^{\mathrm{UCB}}$ (\S3). The two schedules then differ by exactly that calibration correction:
\emph{deployed drift-aware clock $=$ epistemic trigger $+$ distribution-free calibration.}
\end{remark}

\begin{remark}[vector / block structure]
\label{rem:vector}
In higher dimension the epistemic term is $\tfrac12\ln\det(I+P_h\Sigma_o^{-1})$ --- a \emph{sum} over spectral directions --- while the block certificate norm $\max_b\norm{e_b}/s_b$ is a \emph{max} functional. The two threshold families coincide exactly only in the scalar/isotropic case; in general they are directionally aligned (each bounds the other up to block-count factors) but not identical.
\end{remark}

\begin{remark}[scope]
\label{rem:scope}
The equivalence concerns the epistemic-only, greedy-threshold form of expected-free-energy scheduling on a fixed observation channel --- not full POMDP planning, and not the pragmatic/risk terms. It confers a theoretical identity on the primitive; it does \emph{not} contradict the empirical findings that a non-spectral conformal horizon matches the deployed clock on validity and budget in the short-horizon regime (\S5.4--5.5).
\end{remark}

\section{Exploratory Structure Ablation: Equivariant vs Plain World Models}
\label{app:l2}

\textbf{Status.} Exploratory; \emph{not} pre-registered; each finding replicated on a fresh split before being reported here. This appendix does not alter the contributions or the boundary of the main text; it addresses a natural reviewer question --- \emph{what does structure buy for the sensing clock?} --- with the same certified-clock protocol run on matched pairs of equivariant and non-equivariant world models.

\paragraph{Protocol.}
For each frozen checkpoint: audit the on-manifold spectrum, fit the native drift envelope $b_h^{\rm UCB}$ (P95) on a calibration split, calibrate $\epsilon_{\rm cert}$ per model (smallest tolerance with calibration interval-ICV $U_{95}\le0.10$ at the induced deadline, $1<T<H$), under a per-model scale-free certificate norm $\norm{e}_2/s$ with $s$ the calibration median one-step target displacement. A model--split pair is \emph{certifiable} if such an $(\epsilon,T)$ exists. In 3D, the tolerance grid is additionally capped by a latent-diameter anchor (a certification tolerating errors beyond the representation cloud's own diameter is meaningless); this cap also removes a grid-resolution pathology caused by exploding-rollout seeds.

\paragraph{2D (capacity-matched pair).}
Paired training wave, $167{,}360$ vs $167{,}424$ parameters; certifiability assessed on two disjoint splits.

\begin{center}
\small
\begin{tabular}{lccc}
\toprule
arm & certifiable (split 1 / split 2) & sat.\ ratio $e_{12}/e_1$ & mechanism \\
\midrule
equivariant (7 seeds) & \textbf{7/7 \ / \ 7/7} & 5.5--11.7 & monotone error hierarchy \\
plain (6 seeds)       & 2/6 \ / \ 2/6           & 1.3--3.0  & one-step error saturation \\
\bottomrule
\end{tabular}
\end{center}

The certifiable equivariant models deploy at $T=2$ with held-out interval-ICV $U_{95}\in[0.047,0.088]$ (all valid). The plain failures are not latent collapse (latent variance is comparable across arms); the plain predictor's open-loop error saturates within one step ($b_1\!\approx\!2.3\to b_{12}\!\approx\!2.6$), leaving no monotone hierarchy for the clock to calibrate a deadline on.

\paragraph{3D (same corpus and protocol as \S5.1; plain is $7.9\times$ larger).}
VN ($52$K params) vs plain PointNet-style ($414$K), three seeds each, two fresh shards never used elsewhere.

\begin{center}
\small
\begin{tabular}{lccc}
\toprule
arm & certifiable (shard A / B) & deployed form & mechanism \\
\midrule
VN (equivariant) & \textbf{3/3 \ / \ 3/3} & $T=2$--$3$, $\epsilon\approx2.5$--$2.7\ \ll$ diam $10$--$15$ & clean hierarchy \\
plain            & 0/3 \ / \ 1/3$^{\dagger}$ & --- & \emph{frozen representation} \\
\bottomrule
\end{tabular}
\end{center}

Five of six plain runs have median one-step target-latent displacement $s\le2\times10^{-13}$: the encoder's latent does not respond to motion at all --- the extreme form of hierarchy collapse, and a retroactive explanation for the near-zero spectral audits this model family produced. ($^{\dagger}$The remaining seat is a numerical ghost: $s$ in the $(10^{-12},10^{-5})$ gray zone inflates the normalized domain by $\sim\!10^{11}$; under a physical degeneracy threshold it is also frozen. Both readings leave the contrast intact.)

\paragraph{Reading, and a bonus about $\lambda$.}
In both regimes the contrast is not about horizon \emph{length}: it is about \emph{certifiability} --- \textbf{structure buys the monotone error hierarchy that a certified deadline calibrates on}. Separately, the audited top rate mis-states the deployable horizon in \emph{opposite directions} across the two regimes (2D: $\lambda\approx5.5$--$7.7$ makes the naive $T_\lambda=1$ \emph{under}-state; 3D: $\lambda\approx0.07$ \emph{over}-states by $\sim\!10\times$, \S3) --- reinforcing that the calibrated drift envelope, not the spectral rate, is the load-bearing deployment term in both.

\section{Stage 2A: The Exact Switch-Time Mixture MB-EIG Baseline}
\label{app:mbeig}

Stage 2A's honesty hinges on the reactive baseline being the \emph{strongest} expected-belief scheduler available in the theorem-bed, so we record its exact form.

\paragraph{Hazard model.}
Each coasting interval carries a latent risk type $u\in\{S,R\}$ emitting a noisy pre-interval cue $c$ with posterior $p_R(c)$. In a risky interval the mode switches from benign to expansive dynamics at a hidden time and the switch is \emph{absorbing}:
\begin{equation}
\Pr(m_{\tau+h}=\text{expansive}\mid u=R)\;=\;1-(1-p_{\rm sw})^h .
\end{equation}

\paragraph{Exact mixture.}
Because the switch is absorbing, the branch space at horizon $h$ is not $2^h$ paths but $h{+}1$: ``switch first occurred at step $s$'' ($s=1,\dots,h$) plus ``no switch by $h$''. MB-EIG therefore evaluates its expected-entropy objective on the \emph{exact} mixture
\begin{equation}
J_{\rm exp}(h,c)\;=\;\sum_{s=1}^{h}\Pr(s\mid c)\,H\!\big(P_h^{(s)}\big)\;+\;\Pr(\text{no switch by }h\mid c)\,H\!\big(P_h^{(0)}\big),
\end{equation}
with absorbing-hazard branch weights
\begin{equation}
\Pr(s\mid c)=p_R(c)\,(1-p_{\rm sw})^{s-1}p_{\rm sw},
\qquad
\Pr(\text{no switch by }h\mid c)=1-p_R(c)\big(1-(1-p_{\rm sw})^h\big),
\end{equation}
where $P_h^{(s)}$ propagates $s{-}1$ benign steps followed by expansive steps and $P_h^{(0)}$ is all-benign. The risk-sensitive baseline (MB-CVaR) computes $\mathrm{CVaR}_\beta$ on the \emph{same} branch distribution $\{\Pr(s\mid c)\}$.

\paragraph{Why moment matching is not allowed as the primary baseline.}
The single-covariance recursion
$P_{h+1}^{\rm mix}=(1-\pi_h)A_bP_h^{\rm mix}A_b^\top+\pi_hA_eP_h^{\rm mix}A_e^\top+Q$, with $\pi_h=p_R(c)(1-(1-p_{\rm sw})^h)$,
collapses the branch distribution to its moments and thereby \emph{weakens} the reactive baseline exactly where the contrast lives (the expansive tail). It is used only as a speed/diagnostic fallback. Consequently the Stage~2A separation (\S5.2) is against the exact-mixture --- i.e.\ the strongest --- expected-belief scheduler, and the residual-tail decomposition attributes Eq-spec's own leakage to its pre-registered false-exclusion mass rather than to deadline failure.

\section{Stage 2B: Bench Geometry, and Why Lead-Time Separation Is Structurally Hard in 1D}
\label{app:bench2b}

\paragraph{Bench.}
A 1D point system $p_{t+1}=p_t+v_t+\mathcal N(0,\sigma_p^2)$, $v_{t+1}=v_t+u_t+\mathcal N(0,\sigma_v^2)$, with an irreversible failure set $\{p\ge p_{\rm fail}\}$ and a hazard zone $[p_{\rm haz},p_{\rm fail})$. A PD controller drives the cart \emph{through} the hazard toward $p_{\rm goal}>p_{\rm fail}$, so every nominal episode is a genuine approach-to-hazard; $t_{\rm last}$ is the last step at which max-braking still avoids failure, and $L_{\rm safe}=t_{\rm last}-\tau_{\rm trigger}$. The Stage~2B configuration (\S5.3) uses matched budget $B\approx0.50$ and $t_{\rm last}\approx2$--$4$.

\paragraph{Post-hoc structural analysis (no additional contrast was run).}
A follow-up geometry search for a low-budget variant ($B\le0.20$, $t_{\rm last}\in[6,12]$, deadline diversity $T_{\rm high}/T_{\rm low}\ge3$, late residual onset) terminated at its pre-declared gate: the gates could not be satisfied simultaneously in this bench family. The diagnosis is structural rather than a tuning failure, and it sharpens the reading of the Stage 2B null:

\begin{enumerate}
\item \emph{Homogeneous starts make the null a theorem.} With a common start state, the certified deadline $T_{\rm spec}(z_0)$ and a uniform poller's period are both episode-constant; at matched budget they coincide step-for-step, so \emph{no} a-priori deadline can separate from constant-period polling. The near-identical lead-time rates of \S5.3 ($0.955$ vs $0.957$) are the expected outcome, not a coincidence.
\item \emph{Heterogeneous starts trade the two required structures against each other.} In this family the certified deadline, the braking deadline, and hazard entry are nearly collinear --- all shifts of arrival time: $T_{\rm spec}\approx t_{\rm entry}+\Delta$, $t_{\rm last}\approx t_{\rm entry}+w$. Creating deadline diversity (spread in arrival times) simultaneously widens the slow tail's braking window $w$, which restores exactly the early-onset structure that lets reactive and periodic baselines succeed. In a six-round calibration sweep the diversity gate never exceeded $2.0\times$ (target $3\times$) while the late-onset gate peaked at $0.69$ (target $0.8$).
\end{enumerate}

\paragraph{Consequence.}
The lead-time question for certified sensing clocks is not settled negatively by \S5.3 alone; rather, \emph{this bench family cannot pose it}. Establishing (or refuting) an actionable lead-time advantage requires hazard structure in which certified deadlines and braking deadlines decouple --- multiple hazard sites, non-monotone approaches, or higher-dimensional dynamics. We record this as an open bench-design problem.

\section{Related-Work Map}
\label{app:relmap}

\begin{table}[h]
\centering
\small
\caption{Related-work map. Each family schedules or verifies under a different trigger and guarantees a different object; the P4-A primitive triggers on a certified, drift-aware validity horizon and guarantees interval-simultaneous certificate validity.}
\label{tab:related}
\setlength{\tabcolsep}{3.5pt}
\begin{tabular}{p{0.205\linewidth}p{0.225\linewidth}p{0.225\linewidth}p{0.205\linewidth}}
\toprule
Family & Trigger & Guarantee object & Our difference \\
\midrule
VoI / POMDP active sensing~\citep{yotam2023rhopomdp} & expected belief value & task reward / information & certified coasting validity \\
Reactive entropy / info-gain~\citep{udupa2026kstep} & predicted uncertainty / entropy & expected belief quality & eventful-tail ICV not controlled by mean-IG \\
Conformal / calibrated UQ~\citep{angelopoulos2021conformal} & calibrated residual & distribution-free coverage & temporal sensing deadline (conformal is a strong replacement here) \\
Active inference / EFE~\citep{sargun2026cancer} & epistemic-value objective & heuristic / objective-driven & certificate-grounded horizon (formal bridge: Appendix~\ref{app:efe}) \\
WM verification~\citep{geng2026dwm,wang2026evwm} & rollout / event check & verified imagined future & validity-horizon-as-clock \\
\textbf{P4-A (ours)} & certified drift-aware deadline & interval-simultaneous ICV & sensing-clock primitive \\
\bottomrule
\end{tabular}
\end{table}

\end{document}